\def\etal{\textit{et al. }}
\newlength\savewidth\newcommand\shline{\noalign{\global\savewidth\arrayrulewidth
  \global\arrayrulewidth 1pt}\hline\noalign{\global\arrayrulewidth\savewidth}}
\newcolumntype{x}[1]{>{\centering\arraybackslash}p{#1pt}}
\newcommand{\tablestyle}[2]{\setlength{\tabcolsep}{#1}\renewcommand{\arraystretch}{#2}\centering\footnotesize}
\newcommand{\cgaphlp}[2]{
\fontsize{6pt}{-.5em}\selectfont{(${#1}$\textbf{#2})}
}
\renewcommand\paragraph{\@startsection{paragraph}{3}{\z@}
  {.4em \@plus1ex \@minus.2ex}{-.4em}{\normalfont\normalsize\bfseries}}\makeatother
\begin{document}
\pagestyle{headings}
\mainmatter
\def\ECCVSubNumber{4047}  

\title{PalQuant: Accelerating High-precision Networks on Low-precision Accelerators} 


%
\author{Qinghao Hu\thanks{Equal Contribution.}\inst{1}\orcidlink{0000-0003-0422-5509
}\index{Hu, Qinghao} \and
Gang Li$^{\star}$\inst{2}\orcidlink{0000-0001-7835-4739} \and 
Qiman Wu$^{\star}$\inst{3}\orcidlink{0000-0001-7959-6860} \and Jian Cheng \inst{1}\orcidlink{0000-0003-1289-2758}\Letter}
\authorrunning{Q. Hu et al.}
%
\institute{Institute of Automation, Chinese Academy of Sciences  \and
Shanghai Jiao Tong University  \and  Baidu Inc.
\\
\email{huqinghao2014@ia.ac.cn, gliaca@sjtu.edu.cn, wuqiman@baidu.com, jcheng@nlpr.ia.ac.cn}}

\maketitle

\begin{abstract}
 Recently low-precision deep learning accelerators (DLAs)  have become popular due to their advantages in chip area and energy consumption, yet the low-precision quantized models on these DLAs bring in severe accuracy degradation. 
One way to achieve both high accuracy and efficient inference is to deploy high-precision neural networks on low-precision DLAs, which is rarely studied. In this paper, we propose the PArallel Low-precision Quantization (PalQuant) method that approximates high-precision computations via learning  parallel low-precision representations from scratch. In addition, we present a novel cyclic shuffle module to boost the cross-group information communication between parallel low-precision groups. 
Extensive experiments demonstrate that PalQuant has superior performance to state-of-the-art  quantization methods in both accuracy and inference speed, e.g., for ResNet-18 network quantization, PalQuant can obtain 0.52\% higher accuracy and  1.78$\times$ speedup simultaneously over their 4-bit counter-part on a state-of-the-art 2-bit accelerator. Code is available at  \url{https://github.com/huqinghao/PalQuant}.
\keywords{Quantization, Network Acceleration, CNNs}
\end{abstract}
\section{Introduction}
Recently various model compression techniques have been proposed to deploy deep neural networks on resource-constrained edge devices. Among these, fixed-point quantization \cite{gupta2015deep,qiu2016going,rastegari2016xnor} that converts full-precision floating-point operation to low-bit integer counterpart has become the \textit{de facto} method due to its hardware efficiency.

At present, most of the commercial CNN accelerators are designed for high-precision (such as INT16/INT8) arithmetic. One important reason for this is because the accuracy of a quantized network is hard to retain as the quantization bit-width narrows. 
Yet low-precision accelerators lead to orders of magnitude decrease in chip-area and energy consumption compared to the high-precision hardware \cite{DBLP:journals/corr/abs-1807-03010}. This motivates plenty of researchers to study how to improve the accuracy of quantized networks on low-precision accelerators \cite{rastegari2016xnor,gong2019differentiable,mckinstry2019discovering,jung2019learning,esser2019learned}. While these methods focus on designing low-precision quantization algorithms, the accuracy of quantized networks may be limited by the low computation precision of accelerators. 

Different from the above methods, we try to answer the question: 
\textit{Is it possible to deploy high-precision networks on the existing well-designed low-precision accelerator to capture both model accuracy and inference efficiency?} To achieve this, a naive solution is decomposing a high-precision network at the bit level and conducting inference on low-precision hardware in a nibble iteration manner \cite{DBLP:journals/corr/abs-2101-11748}. For example, to run an 8-bit network on a 2-bit accelerator, we can split each 8-bit operand into four 2-bit operands so that the original 8-bit multiplication can be carried out in $4\times 4=16$ steps using 2-bit multipliers with proper shifting. Although the 2-bit operation is much cheaper than the 8-bit counterpart in terms of chip area and power consumption, there is no gain in inference latency since the total amount of bit operations is unchanged.

In this paper, we investigate the opportunity of fast and accurate inference of high-precision CNN models on low-precision hardware from the algorithm side. We propose PArallel Low-precision Quantization (PalQuant), a hardware-friendly, simple yet effective quantization method for efficient CNN acceleration. Different from the naive bit-level decomposition solution, PalQuant reduces the computational complexity by dividing the expanded low-precision channels into parallel groups.

To encourage information flowing across different groups, we propose a novel cyclic shuffle module that fuses features from two consecutive groups cyclically in a hardware-friendly way. One important property of cyclic shuffle is that it may serve as a complement to channel shuffle. This is mainly due to that cyclic shuffle fuses channel features at group level while the channel shuffle fuses a fraction of channel features from each group.
Extensive experiments from both algorithm and hardware sides demonstrate that PalQuant can consistently achieve the highest accuracy and inference speedup than state-of-the-art methods. 
The contribution of this paper can be summarized as follows:
\begin{itemize}
    \item We propose the PalQuant algorithm that enables efficient high-precision computation in low-precision accelerators via learning parallel low-precision representations from scratch.
    \item We propose a novel cyclic shuffle module to help the information flow across parallel low-precision convolution groups. 
    \item Extensive experiments on  ImageNet benchmark demonstrate that PalQuant outperforms state-of-the-art quantization methods in terms of both accuracy and computational cost. We also examine the speed-up of PalQuant on two CNN accelerators \cite{8416871,Ghodrati2020BitParallelVC}, which shows that PalQuant achieves 1.7$\times$ speed-ups than state-of-the-arts on ResNet-18.
\end{itemize}
\section{Related Work}
\label{section_related_work}
\subsection{Quantization Methods}
Deep Neural Network Quantization methods have become popular in recent years as they can reduce energy consumption and inference latency of deep neural networks. One line of quantization methods aims to reduce the quantization bit-width, while maintaining the network accuracy. 
Early quantization methods mainly focus on  high bit-width fixed-point quantization \cite{gupta2015deep,qiu2016going}, e.g. 8bit or 16bit quantization scheme, which brings in little accuracy degradation. 
Later, various binary \cite{courbariaux2015binaryconnect,rastegari2016xnor,hou2016loss} and ternary quantization methods \cite{lin2015neural,li2016ternary,zhu2016trained} are proposed to reduce the  multiplication operations in the network inference.
Another line of research on quantization methods is to learn good quantization parameters such as quantization step-values, clipping values, and so on. 
Early quantization methods use fixed quantization parameters \cite{hubara2016binarized,gupta2015deep}, dynamic parameters based on statistics of the data distribution \cite{cai2017deep,mckinstry2018discovering}, or seek the parameters that minimize the quantization error \cite{rastegari2016xnor,zhang2018lq}.
Recently, researchers propose to use trainable quantization parameters, e.g. clipping values \cite{lee2021network,choi2018pact} and step-size \cite{esser2019learned,jung2019learning}. These parameters can be learned by gradient back-propagation which minimizes the task loss.
Another related work is WRPN \cite{mishra2017wrpn} which increases the number of filter maps and reduces the quantization bit-width of feature maps. 
Our proposed method differs with WRPN \cite{mishra2017wrpn} in three aspects. First, Our PalQuant and WRPN \cite{mishra2017wrpn}  target at different problems. While  WRPN \cite{mishra2017wrpn} mainly tries to reduce the large memory footprint of high-precision feature maps via  wide reduced-precision representations, our PalQuant enables efficient high-precision computations on low-precision accelerators. Second, PalQuant reduces computational complexity and memory access via parallel low-precision computation scheme, and it achieves higher or comparable network accuracy with less computational cost than WRPN \cite{mishra2017wrpn}. Third, we propose a novel cyclic shuffle module to fuse features across different groups, that further strengthens the model representation. 

\subsection{Hardware Accelerators for CNN}
The extremely high computational complexity of CNN poses a significant challenge to real-world deployment, especially for resource-constraint embedded devices. As a result, energy-efficient FPGA/ASIC-based CNN accelerators have gained increasing popularity recently in both academia and industry. To maintain network accuracy, the computing architecture of early CNN accelerators mainly exploits floating-point and high-precision fixed-point data types (such as 16-bit/8-bit). For example, Zhang \etal \cite{zhang2015optimizing} designs the first FPGA-based accelerator for floating-point CNN inference. DianNao \cite{chen2014diannao}, Eyeriss \cite{chen2016eyeriss}, and TPU \cite{jouppi2017datacenter} are ASIC-based accelerators designed for 16-bit quantized CNN.

With the rapid development of quantization methods in the deep learning community, many low-precision CNN accelerators have been proposed to further improve hardware efficiency. With the help of power-of-two quantization \cite{zhou2017incremental}, Li \etal \cite{li2019system} and Tann \etal \cite{tann2017hardware} propose multiplier-free architectures for area and power-efficient inference by replacing conventional integer multiplications with shift-based operations. YodaNN \cite{andri2017yodann} is an ASIC-based accelerator tailored for Binary Weight Networks \cite{courbariaux2015binaryconnect}. Bit Fusion \cite{8416871}, Bitblade \cite{ryu2019bitblade}, and BPVeC \cite{Ghodrati2020BitParallelVC} are precision-scalable CNN accelerators exploiting 2-bit multipliers for arbitrary-precision computation. Umuroglu \etal \cite{umuroglu2017finn} and Zhao \etal \cite{zhao2017accelerating} propose dedicated accelerators for Binarized Neural Networks \cite{DBLP:conf/nips/HubaraCSEB16}, which can achieve the highest frame-per-second under extremely low area and energy consumption. Lascorz \etal \cite{delmas2019bit} accelerates CNN inference through hardware/software co-design, while PalQuant is more general and can be deployed on a variety of accelerators, including bit-parallel and bit-serial accelerators.
\section{Preliminaries}
\label{section_preliminaries}
\subsection{Notation}
The input activation and weight of a fully-connected layer in a deep neural network are denoted by $X\in \mathcal{R}^{N\times S}$ and $W\in \mathcal{R}^{T\times S}$, respectively. The layer's output $Y$ can be obtained by:
\begin{align}\label{fc}
Y=XW^T
\end{align}
The quantized input activation and weight are denoted by  $\hat{X}\in \mathcal{R}^{N\times S}$ and $\hat{W}\in \mathcal{R}^{T\times S}$, respectively. For a convolutional layer, the output $Y$ can also be obtained by Eq. \ref{fc}, since the weights and activations can be transformed into matrices. 
\subsection{Uniform Quantization}
Recently uniform quantizer with trainable quantization parameters \cite{gong2019differentiable,lee2021network} becomes popular. Given a floating-point number $x$, it first maps $x$ to a range $\left[0,1\right]$ by the following equation:
\begin{align}
    {x}_n=clip\left(\frac{x-l}{u-l},0,1\right)
\end{align}
where $clip(\cdot,\cdot,\cdot)$ denotes the clip function, $l$ and $u$ denotes lower bound and up bound, respectively. Then the integer value $x_q$ can be obtained by the quantization function:$x_q=\lfloor (2^b-1)x_n\rceil$
, where $\lfloor\cdot\rceil$ is rounding-to-nearest operation, and $b$ denotes the quantization bit-width. And the de-quantization function is given by the following formula:
\begin{align}
    \hat{x}=\begin{cases}
    \frac{x_q}{2^b-1}& x \in weight\\
    2(\frac{x_q}{2^b-1}-0.5)& x \in activation
    \end{cases}
\end{align}
where the formula maps the de-quantized weights to a range $\left[-1,1\right]$, and restricts the de-quantized activations to be non-negative. An additional parameter $\alpha$ \cite{lee2021network}  is required to  multiply the output activations of each layer, which adjusts the  output scale of the whole feature map. 
\subsection{Bit-level Decomposing}
Consider that there is an accelerator that supports only $B$-bit computation, a naive solution for running a high-precision ($M$-bit) model on this accelerator is to decompose $M$-bit representations to multiple $B$-bit  at bit-level, then shift and sum up those $B$-bit operations results: 
 \begin{align}\label{bit_decompose}
    Y=\hat{X}\hat{W}^T=\sum\limits_{i=0}^{G-1}\sum\limits_{j=0}^{G-1}\hat{X}_i\hat{W}_j^{T}*2^{(i+j)*B}
\end{align}
where $\hat{X}_i$ and $\hat{W}_j$ are $B$-bit input and weights that are generated by splitting $M$-bit input and weights in bit-level, and $G=ceil(M/B)$. Note that the computational cost is unchanged comparing to the original $M$-bit computation.
\begin{figure}
  \subfigure[]{\includegraphics[width=0.48\linewidth]{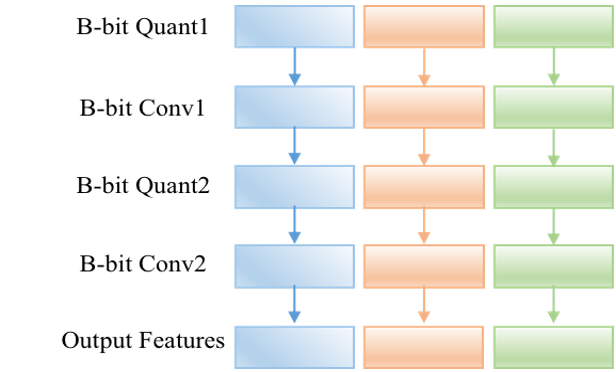}
  \label{parallel_block_fig} }
  \subfigure[]{\includegraphics[width=0.48\linewidth]{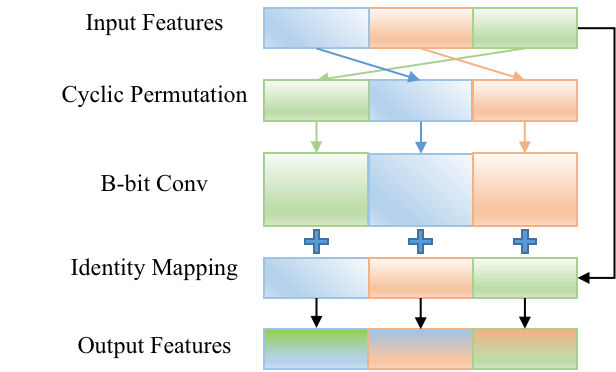}
  \label{cyclic_shuffle_fig} }
  \caption{(a) The proposed parallel low-precision computation scheme. Here the hyper-parameter $G$ is fixed to 3. B-bit Quant and B-bit Conv denote a  B-bit quantization layer and B-bit convolutional layer, respectively. (b) The proposed cyclic shuffle module. Here B-bit Conv denotes a B-bit quantized 1$\times$1 group convolutional layer with $G$=3}
\end{figure}
\section{Proposed Method}
\label{section_proposed_method}
\subsection{Parallel Low-Precision Quantization}
Since the bit-level decomposing method (Eq.\ref{bit_decompose}) has the same amount of bit operations as the original high-precision computation, there is no gain in inference latency. At first, we try to reduce the inference latency by an approximation to Eq.\ref{bit_decompose}:
 \begin{align}\label{outpout}
    min_{\bar{W_i}}&=\lVert\sum\limits_{i=0}^{G-1}\sum\limits_{j=0}^{G-1}\hat{X}_i\hat{W}_j^{T}*2^{(i+j)*B}-\sum\limits_{i=0}^{G-1}(\hat{X_i}\bar{W_i}^{T})*2^{i*B}\rVert_F^2
\end{align}
where $\hat{W_j}$ and $\hat{X_i}$ are $B$-bit representations chunked from $M$-bit $\hat{W}$ and $\hat{X}$, respectively. We found that  the $B$-bit computations in Eq.\ref{outpout} are naturally in parallel $G$ groups, which provides the potential to be hardware-friendly. In addition, Eq.\ref{outpout} approximates the standard results with only $G$ parallel $B$-bit computations while  Eq.\ref{bit_decompose} requires $G^2$ $B$-bit computations. This indicates that the proposed approximation only costs $\frac{1}{G}$ computation complexity of the bit-level decomposition method. 

Yet there are two problems in Eq.\ref{outpout}. One is that we empirically found that all $G$ solutions of $\bar{W_i}$ degenerate to one similar solution, which results in an inferior network accuracy. We assume that this phenomenon has relations with two aspects: 1. $B$-bit representations are chunked from $M$-bit ones, thus they are coupling together with a strong connection\footnote{For example, given the 4-bit number $x$, the lowest 2-bit number changes from 3 to 0 when $x$ changes from 3 to 4 ($\left[0\ 0\ 1 \ \ 1\right]\rightarrow \left[0\ 1\ 0 \ 0\right]$)}; 2. minimizing the feature map reconstruction loss easily causes the over-fitting problem.
The other problem in Eq.\ref{outpout} is that chunking $M$-bit $\hat{X}$ to multiple $B$-bit representations tend to be inferior to learning $G\times$ $B$-bit representations directly by back-propagation, as mentioned in \cite{mishra2017wrpn}.

To tackle the above problems, we propose the parallel low-precision quantization scheme  by learning multiple low-precision representations in parallel. It inherits the computation efficiency of  parallel low-precision computation from Eq.\ref{outpout}, and cures the solution degeneration problem via training $B$-bit representations through back-propagation.
Specifically, we expand the activation channels of a convolutional layer by $G\times$, and split these activations into $G$ groups along the channel dimension. Then we quantize the activations and weights to $B$-bit in each group, and there are $G$ groups of $B$-bit weights in total. 
Fig.\ref{parallel_block_fig} depicts the proposed parallel low-precision quantization scheme, which can be  implemented naturally by quantized group convolution with expanded channels.
\subsection{Cyclic Shuffle}
Although the proposed parallel low-precision computation scheme enjoys wonderful hardware efficiency, it hinders information flowing across different groups of feature channels. 
To cure this problem, we propose a novel cyclic shuffle module to enhance information communication across different groups. 
First, we adopt cyclic permutation \cite{bogart1989introductory} to permute the channels in group-level. Given input $X$ with shape $\left[N\times C \times H \times W\right]$, it can be reshaped to $\left[N\times G \times \hat{C}\times  H \times W\right]$ where $\hat{C}=C/G$.  We propose to re-order the channels  in group-level via the cyclic permutation. Specifically, let $\mathbf{S}=\{ 0,1,2,3\cdots G-1\}$ denotes group indexing set, a cyclic permutation function  $\pi(\cdot)$ for $\mathbf{S}$ is 
\begin{equation}\label{permute_function}
    \pi(i)\ = \quad (i+1)\ mod\ G
\end{equation}
A more clear notation of $\pi(\cdot)$ by Cauchy's two-line notation \cite{wussing2007genesis} is:
\begin{equation}
\begingroup 
\setlength\arraycolsep{8pt}
    \begin{pmatrix}
    0&1 & 2 & 3 & 4 & \cdots &G-2& G-1\\
    1&2 & 3 & 4 & 5 &\cdots &G-1& 0\\
    \end{pmatrix} 
    \endgroup
\end{equation}
where the first line denotes the channel group indexing set $\mathbf{S}$ and the second line denotes the corresponding order after permutation. After  cyclic permutation, the output $Y$ can be represented by:
\begin{equation}\label{permute}
    {Y}[:,\pi(i),:,:,:]={X}[:,i,:,:,:]
\end{equation}
By using Eq.\ref{permute}, we can re-order feature channels  by permuting the group-level feature cyclically. Under this scheme, the information from each group is received by its following one, which helps the information exchange between feature channels of different groups. 
Although we have exchanged the information between different groups by the cyclic permutation, each group still holds only one group of feature channels. In order to fuse the feature channels from different groups, we build a novel module, named cyclic shuffle, that composes of  the cyclic permutation and a 1$\times$1 group convolution with short-cut connection. Given the input $X$, the output $Z$ of the cyclic shuffle module can be represented by the following formulas:
\begin{align}\centering
    &{Y}=\mathrm{CyclicPermute}({X})\\
    &{Z}={X}+\mathrm{Conv}_{1\times 1}({Y})
\end{align}
where $\mathrm{CyclicPermute}(\cdot)$ denotes the cyclic permutation function (refer to Eq.\ref{permute}).
In the cyclic shuffle module, the cyclic permutation re-orders the  feature channels at group level, then the 1x1 group convolution learns a mapping function for channel fusing, finally the short-cut connection together with the element-wise addition operation aggregate feature channels  from different groups. Fig. \ref{cyclic_shuffle_fig} depicts the proposed cyclic shuffle module. 

Our proposed cyclic shuffle module enjoys benefits from three aspects: 1. According to Proposition. \ref{propss}, after passing through G-1 cyclic shuffle modules, each group receives the information flows from all groups of feature channels, that helps cure the side-effect of group convolution. 2. The proposed cyclic shuffle module fuses feature channels at group level, which makes it  possible for the intermediate results  to stay on
chip. As a result, it saves data access to the out-chip memory significantly. 3. Our cyclic shuffle module fuses feature channels in group-level  while channel shuffle aggregates a part of feature channels from each group. Therefore, cyclic shuffle can be combined with channel shuffle to achieve higher network accuracy.
\newtheorem{prop}{Proposition}
\begin{prop}\label{propss}
Suppose the number of groups is ${G}$, then ${G}$-${1}$ cyclic shuffle modules are enough to assure that each group contains the information flows from all parallel groups.
\end{prop}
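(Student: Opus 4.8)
The plan is to prove this by induction on the number of stacked modules, tracking the set of original groups whose information reaches each output group. First I would pin down precisely what a single cyclic shuffle module does at the group level. By Eq.~\ref{permute} the cyclic permutation sends the contents of group $i$ to group $\pi(i)=(i+1)\bmod G$, so after permutation the $j$-th group of $Y$ holds the original group $(j-1)\bmod G$ of $X$. Because the subsequent $1\times1$ convolution is a \emph{group} convolution, its $j$-th output group is a function of only the $j$-th group of $Y$, i.e. of the original group $(j-1)\bmod G$; the shortcut connection $Z=X+\mathrm{Conv}_{1\times1}(Y)$ then adds back the original group $j$. Hence one module makes group $j$ of the output depend on exactly the two original groups $\{\,j,\;(j-1)\bmod G\,\}$ and no others.

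Next I would define, for $k\ge 0$, the set $I_k(j)\subseteq\mathbf{S}$ of original groups contributing to group $j$ after $k$ modules, with base case $I_0(j)=\{j\}$. From the single-module analysis the recurrence is
\begin{equation}
    I_{k+1}(j)=I_k(j)\cup I_k\big((j-1)\bmod G\big),
\end{equation}
since the shortcut preserves the sources already accumulated in group $j$ while the permuted group convolution injects those of group $(j-1)\bmod G$. I would then prove by induction on $k$ that $I_k(j)=\{\,j,\,j-1,\,\dots,\,j-k\,\}\bmod G$, a contiguous cyclic block of $k+1$ indices. The base case is immediate, and in the inductive step the union of the block ending at $j$ with the block ending at $j-1$ extends it by exactly one new index, $(j-k-1)\bmod G$, giving a block of length $k+2$.

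Finally I would set $k=G-1$, so that $I_{G-1}(j)$ is a contiguous block of $G$ distinct residues mod $G$, i.e. all of $\mathbf{S}$, for every $j$; thus after $G-1$ modules each group carries information from all $G$ groups, which is the claim. The main obstacle I expect is the very first step: carefully arguing, from the indexing convention of Eq.~\ref{permute} together with the group structure of the $1\times1$ convolution, that the dependence of an output group is precisely the two sources $\{\,j,(j-1)\bmod G\,\}$ rather than something broader or narrower. Once this mechanism is fixed, the recurrence and its closed form are routine. I would also note tightness: since $|I_k(j)|=k+1$, after only $G-2$ modules $|I_{G-2}(j)|=G-1<G$, so at least one source group is still missing, confirming that $G-1$ modules are not merely sufficient but necessary.
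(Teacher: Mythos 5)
Your proof is correct and follows essentially the same route as the paper's: the paper simply observes that $\pi$ is a $G$-cycle and that each module lets a group absorb its predecessor's information, which is exactly the recurrence $I_{k+1}(j)=I_k(j)\cup I_k\bigl((j-1)\bmod G\bigr)$ that you make explicit and solve by induction. Your version is more rigorous (it pins down the two-source dependence per module from the shortcut plus grouped convolution) and adds a tightness observation ($G-2$ modules are insufficient) that the paper does not state, but the underlying argument is the same.
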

\begin{proof}
The proof is simple. The cyclic permutation $\pi(\cdot)$  in Eq.\ref{permute_function} is a $G$-cycle \cite{bogart1989introductory}, which means the group indexes will be the same as the original order after ${G}$ times cyclic permutation. As each group fuses a the information flow from its previous group after passing through one cyclic shuffle module, then ${G}$-1 cyclic shuffle modules are enough to assure that each group contains all information from $G$ group channels. 
\end{proof}
\begin{figure}[htbp]
\centering
\includegraphics[width=0.9\linewidth,scale=1.0]{./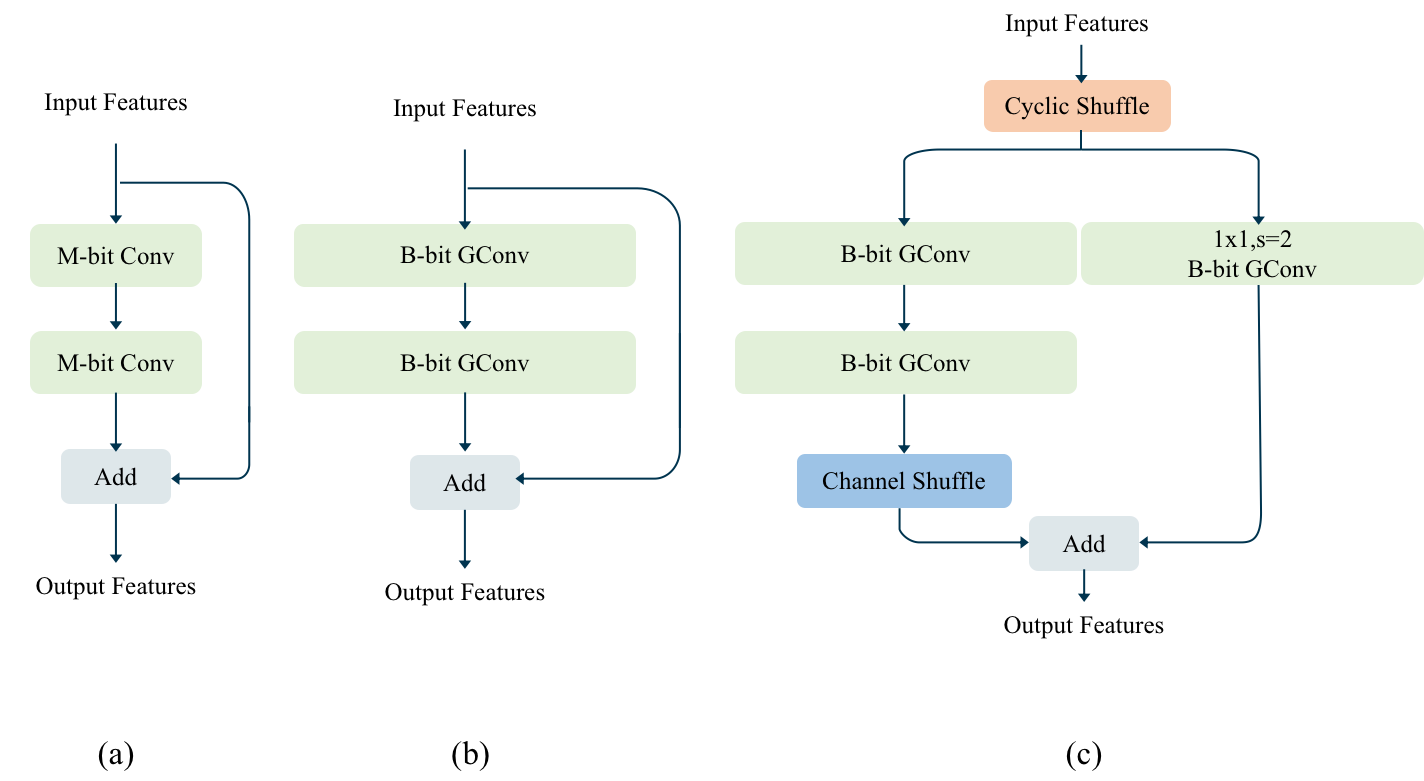}
\caption{(a) A typical building block of $M$-bit quantized ResNet network; (b) A typical building block of PalQuant with $G$=2;(c) A typical building block of PalQuant with shuffle modules (stride=2, $G$=2)}
\label{overall_blocks}
\end{figure}
\subsection{Overall Framework}
In this subsection, we give the overall framework of our proposed method. 
To approximate $M$-bit computations on $B$-bit accelerator, we propose the parallel low-precision quantization scheme which expands the channel dimensions of each layer by $G$ times ($G=ceil(M/B)$) and splits them into $G$ parallel groups. Taking ResNet network quantization for an example, PalQuant makes a simple modification to the standard $M$-bit basic building block (Fig. \ref{overall_blocks} (a)): enlarging the input channels by $G\times$ and using $B$-bit quantized group convolution with the number of groups equals to $G$. The basic building block of PalQuant is depicted in Fig. \ref{overall_blocks} (b).

To cure the side-effect caused by the group convolution, we propose the cyclic shuffle module to help information communication between different groups. Considering that the cyclic shuffle module contains a $1\times 1$ group convolution layer, we only place the cyclic shuffle module at the beginning of each stage
, which brings little extra computational cost.  
Since cyclic shuffle and channel shuffle fuse information flows in different ways, we use channel shuffle module as a complement. As the channel shuffle in the middle of two consecutive group convolutions will impair the parallel computation flows, we move the channel shuffle module  to the end of the convolution block.
 A typical building block of PalQuant with shuffle modules is depicted in Fig.\ref{overall_blocks} (c).
\section{Experiments}
\label{section_experiments}
In this section, we first give details of experiment settings, then we compare our proposed algorithm PalQuant with state-of-the-art quantization algorithms. To demonstrate the hardware efficacy of PalQuant, we further conduct extensive experiments on CNN accelerators. 
Finally, we show the ablation study results and the generalization of PalQuant under different hyper-parameter $B$ and $G$.\\
\textbf{Network Architectures and Datasets.} Since ResNet \cite{he2015deep} is widely used in various computer vision tasks, here we adopt ResNet-18 and ResNet-34 as benchmark networks. 
All the experiments in this work are conducted on ImageNet dataset which has over 1 million training images and 50,000 validation images.\\
\textbf{Training Details.} In this work, we use uniform quantization for the input activation and weight of each convolutional or fully-connected layer. In the training stage, a straight-through estimator(STE) \cite{bengio2013estimating} is used to approximate the gradient through the rounding function. Based on this gradient estimator, we learn the lower bound $l$, upper bound $u$, and the scaling factor $\alpha$ through gradient back-propagation. 
Following experiment settings in previous work \cite{lee2021network,esser2019learned}, the first and the last layer are not quantized unless otherwise specified.
We implement our method under the Pytorch framework and train all the networks on a GPU server with eight Nvidia Titan 3090 GPUs. 
We adopt an SGD optimizer with the momentum set to 0.9. The weight decay is set to 1e-4, and the learning rate is initialized to 0.01 and 
adjusts with a cosine learning rate decay strategy.
Following previous work \cite{esser2019learned}, we train all the models from scratch for 90 epochs with a batch size of 256.
In the training stage, random cropping, resizing, and horizontal flipping are adopted while only resizing and center cropping are used in the validation stage.
\begin{table}[t]
\caption{\textbf{Comparison Results of Quantized ResNet-18 and ResNet-34 on ImageNet.} $\dagger$ Results of WRPN \cite{mishra2017wrpn} on ResNet34 are taken from the paper}
\begin{center}
\tablestyle{1.0pt}{1.2}
\begin{tabular}{c|c|c|c|c|c|c}
\shline
\multirow{2}{*}{Method} &\multirow{2}{*}{Strategy} & \multirow{2}{*}{Precision} & \multicolumn{2}{c|}{ResNet-18} & \multicolumn{2}{c}{ResNet-34}\\
\cmidrule{4-7}
& &&BitOps & Top1 Acc.& BitOps & Top1 Acc.  \\
\hline
DSQ  \cite{gong2019differentiable}  & Fine-tuned& 4b A,4b W &29.10G & 69.60 & 58.74G&72.80 \\ 
FAQ \cite{mckinstry2019discovering} & Fine-tuned&4b A,4b W &29.10G & 69.80 &58.74G &73.30\\ 
QIL \cite{jung2019learning}  & Fine-tuned&4b A,4b W &29.10G & 70.10 & 58.74G& 73.70 \\ 
EWGS \cite{lee2021network} & Fine-tuned&4b A,4b W &29.10G & 70.60 &58.74G & 73.90 \\ 
LSQ \cite{esser2019learned} & Fine-tuned &4b A,4b W &29.10G & 70.70 &58.74G & 73.50\\ 
PalQuant(\textbf{Ours}) &Scratch &\textbf{B=2,G=2} &\textbf{14.87G }& \textbf{71.12} & \textbf{29.68G}& \textbf{73.90}\\
PalQuant(\textbf{Ours}) &Scratch&\textbf{B=2,G=3} &\textbf{22.30G} & \textbf{72.36} & \textbf{44.53G}&\textbf{74.52}\\ 
PalQuant(\textbf{Ours}) &Scratch&\textbf{B=2,G=4} &29.73G & \textbf{72.74} & 59.37G&\textbf{74.96}\\ \hline
EWGS \cite{lee2021network} & Fine-tuned&6b A,6b W &65.48G & 71.01 &132.16G & 74.14 \\ 
LSQ \cite{esser2019learned}  & Fine-tuned&6b A,6b W &65.48G & 71.59 &132.16G &74.43 \\ 
WRPN \cite{mishra2017wrpn}&Scratch&2$\times$,2b A,2b W  & 29.10G& 72.31 & 58.74G&73.32$\dagger$\\ 
PalQuant(\textbf{Ours}) &Scratch&\textbf{B=2,G=3} &\textbf{22.30G} & \textbf{72.36} & \textbf{44.53G}&\textbf{74.52}\\ 
\hline
FAQ \cite{mckinstry2019discovering} & Fine-tuned&8b A,8b W &116.4G & 70.00 &234.94G &73.70 \\ 
LSQ \cite{esser2019learned}  &Fine-tuned&8b A,8b W & 116.4G& 71.10  &234.94G & 73.98\\ 
EWGS \cite{lee2021network} & Fine-tuned&8b A,8b W &116.4G & 71.14 &234.94G & 73.97\\ 
WRPN \cite{mishra2017wrpn}&Scratch&2$\times$,2b A,2b W &29.10G & 72.31 & 58.74G&73.32$\dagger$\\ 
PalQuant(\textbf{Ours}) &Scratch&\textbf{B=2,G=4} &\textbf{29.73G} & \textbf{72.74} &\textbf{59.37G}&\textbf{74.96}\\ 
\shline
\end{tabular}
\label{resnet18_compare}
\end{center}
\end{table}
\subsection{Comparison with State-of-the-arts Algorithms}
We first compare our proposed PalQuant with state-of-the-art quantization algorithms. All the accuracy results of other methods, except for LSQ \cite{esser2019learned} and WRPN \cite{mishra2017wrpn}, are taken from corresponding papers. As LSQ \cite{esser2019learned} reports the results of pre-activation ResNet, we re-implement LSQ \cite{esser2019learned} under Pytorch framework and get the quantization results on standard ResNet. And we re-implement  WRPN \cite{mishra2017wrpn} via the same quantization function as ours, and get better results than the reported ones in \cite{mishra2017wrpn}. 

Table \ref{resnet18_compare} shows the top-1 accuracy of quantized ResNet-18 and ResNet-34 with different quantization precision. From the table, we can observe that PalQuant outperforms state-of-the-art methods in both accuracy and computational complexity. In detail, PalQuant  ($G$=3) achieves +1.66$\%$ (72.66$\%$ v.s. 70.7$\%$)  higher top-1 accuracy than  4-bit LSQ \cite{esser2019learned} with $23.4\%$ (22.3G v.s. 29.1G) less computational cost on ResNet-18. With comparable computational budget, PalQuant ($G$=4) outperforms 4-bit EWGS \cite{lee2021network} by 1.06$\%$ on ResNet-34. Note that both LSQ\cite{esser2019learned} and EWGS\cite{lee2021network} are finetuned from the pre-trained model, while our method are trained from scratch.
In addition,  PalQuant($G$=3) obtains slightly higher top-1 accuracy  than WRPN \cite{mishra2017wrpn} with $23.4\%$ less bitOps on ResNet-18. These results suggest that the proposed PalQuant has strong feature representation power, which leads to the highest accuracy in the experiments under less or equal computational budgets.
\begin{figure}[htbp]
\centering
\includegraphics[width=0.75\linewidth,scale=1.0]{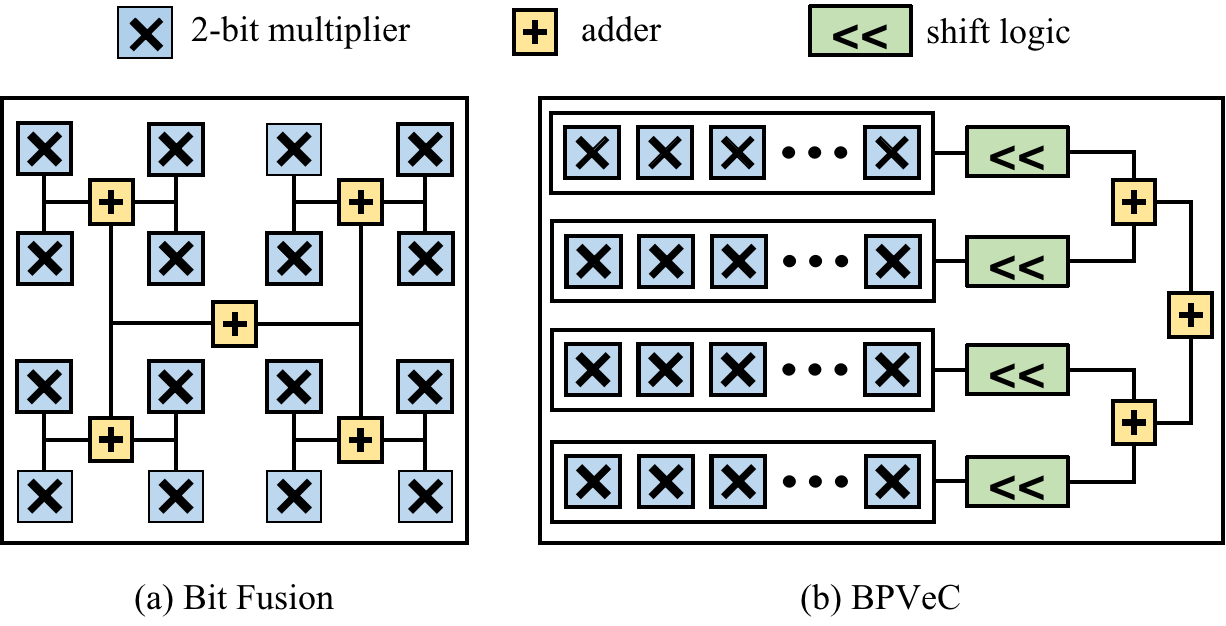}
\caption{The basic building blocks for 2-bit inference in Bit Fusion and BPVeC}
\label{dla_fig}
\end{figure}
\subsection{Efficacy on Hardware Acceleration}
We select the state-of-the-art CNN accelerator Bit Fusion \cite{8416871} and BPVeC \cite{Ghodrati2020BitParallelVC} to demonstrate the efficacy of the proposed PalQuant. Bit Fusion \cite{8416871} is a systolic array-based accelerator designed for low-precision inference, it employs a bit-level decomposable fusion unit for 2/4/8-bit multiplication. To further improve hardware efficiency, BPVeC \cite{Ghodrati2020BitParallelVC} exploits the Narrow Bit-width Vector Engines (NBVE), an energy and area efficient alternative to conventional MAC that consists of multiple 2-bit multipliers and an adder tree for SIMD-based inner-product computation. The basic building blocks of these two accelerators are shown in Fig. \ref{dla_fig}. In this experiment, we compare our proposed method against LSQ \cite{esser2019learned} in terms of inference latency and energy consumption. To mimic resource-constraint computing platforms, we configure both accelerators to 2-bit inference mode with a small on-chip buffer. In this scenario, the on-chip buffer cannot accommodate all weights and activations of a layer, leading to non-negligible data traffic in the memory hierarchy. We develop a cycle-accurate simulator to collect statistics of computation and on-chip buffer access. The power of computational logic and SRAM-based on-chip buffer under 45nm technology are directly drawn from \cite{8416871} and \cite{Ghodrati2020BitParallelVC}. For off-chip data traffic, we assume 15pJ/bit per DDR4 access as in \cite{Ghodrati2020BitParallelVC}. 

Table. \ref{hardware_exp} shows the results of hardware acceleration. It is clear to see that our proposed method consistently outperforms the baseline in performance, energy efficiency, and accuracy for the same precision of weight and activation. For example, PalQuant ($G$=2)  obtains 0.52\% higher accuracy, 1.78$\times$ speedup, and 1.91$\times$ energy efficiency simultaneously over 4-bit LSQ \cite{esser2019learned} on Bit Fusion accelerator \cite{8416871}. The benefit mainly stems from the channel grouping in our method. On the one hand, channel grouping leads to reduced bitOps, which is the source of performance gain. On the other hand, channel grouping eliminates the data dependency between different groups, resulting in reduced on-chip buffer size and off-chip bandwidth requirement, which contributes to the low energy consumption. Besides, the channel shuffle unit at the end, rather than the middle, of a block encourages the intermediate results of group convolution staying on chip, which is another key for saving energy.

In the paper, we evaluate PalQuant on Bit Fusion and BPVeC, yet our algorithm can also witness similar benefits on other bit-parallel accelerators \cite{ryu2019bitblade,camus2019review} which implement a high-precision multiplication as parallel low-precision multiplications followed by reduction. The fundamental reason is that compared to conventional low-precision quantization, our method can always reduce the number of computations and buffer accesses. 
\begin{table}[htbp]
\caption{\textbf{Comparison of inference latency and energy consumption on two hardware accelerators.} We use hyper-parameter $B$=2}
\begin{center}
\tablestyle{0.4pt}{1.2}
\begin{tabular}{c|c|c|c|c|c|c|c|c}
\shline
\multirow{3}{*}{DLA} &\multirow{3}{*}{Method} & \multirow{3}{*}{Precision} & \multicolumn{3}{c|}{ResNet-18} & \multicolumn{3}{c}{ResNet-34}\\
\cmidrule{4-9}
& &&\multirow{2}{*}{SpeedUp}& Energy& Top1& \multirow{2}{*}{SpeedUp}& Energy& Top1  \\
& &&& Efficiency& Acc.& & Efficiency&  Acc.\\
\hline
& LSQ \cite{esser2019learned}& 4b A,4b W&1x &1x& 70.70 & 1x&1x&73.50 \\ 
 & \textbf{Ours}&\textbf{G=2} &\textbf{1.78x} &\textbf{ 1.91x} &\textbf{71.12} &\textbf{1.78x}&\textbf{1.91x}&\textbf{73.90} \\ 
\cline{2-9}
Bit & LSQ \cite{esser2019learned}&6b A,6b W &1x & 1x &71.59 &1x&1x&74.43\\ 
Fusion \cite{8416871} & \textbf{Ours} &\textbf{G=3}&\textbf{2.52x} &\textbf{ 2.78x} &\textbf{72.36} &\textbf{2.50x}&\textbf{2.78x}&\textbf{74.52}\\ 
\cline{2-9}
 & LSQ \cite{esser2019learned}&8b A,8b W &1x & 1x &71.14 &1x&1x&73.97\\ 
 & \textbf{Ours} &\textbf{G=4}&\textbf{3.21x} & \textbf{3.60x} &\textbf{72.71} &\textbf{3.13x}&\textbf{3.60x}&\textbf{74.96}\\ 
\hline
\multirow{6}{*}{BPVeC  \cite{Ghodrati2020BitParallelVC}}  & EWGS \cite{lee2021network}& 4b A,4b W&1x &1x&70.60&1x&1x&73.90 \\ 
& \textbf{Ours}&\textbf{G=2} &\textbf{1.77x} & \textbf{1.92x} &\textbf{71.12} &\textbf{1.87x}&\textbf{1.74x}&\textbf{73.90} \\ 
\cline{2-9}
 & EWGS \cite{lee2021network}&6b A,6b W &1x & 1x &71.01 &1x&1x&74.14\\ 
 & \textbf{Ours} &\textbf{G=3}&\textbf{2.56x} &\textbf{ 2.84x} &\textbf{72.36} &\textbf{ 2.46x}&\textbf{2.81x}&\textbf{74.52}\\ 
\cline{2-9}
 & EWGS \cite{lee2021network}&8b A,8b W &1x & 1x &71.14 &1x&1x&73.97\\ 
 & \textbf{Ours} &\textbf{G=4}&\textbf{3.12x} & \textbf{3.70x} &\textbf{72.71} &\textbf{3.07x}&\textbf{3.67x}&\textbf{74.96}\\ 
\shline
\end{tabular}
\label{hardware_exp}
\end{center}
\end{table}
\subsection{Ablation Study}
In this subsection, we ablate the designs of the PalQuant algorithm based on the ResNet-18 network with hyper-parameter $B$=2 and $G$=2, and all the experiments follow the same training settings as above.
\paragraph{Ablation: Influence of Different Shuffle Modules}
Here we explore the benefits of the proposed cyclic shuffle module.
Table. \ref{per-block} provides the comparison of different shuffle modules. The baseline model in this table doesn't contain either cyclic shuffle or channel shuffle. As we can see, using  cyclic shuffle or  channel shuffle separately achieves similar accuracy improvements. This indicates that both modules help the information communication between different groups. Besides, using cyclic shuffle and channel shuffle jointly demonstrate the highest accuracy gain (up to 3.19$\%$) in terms of top-1 accuracy. This phenomenon verifies our assumption that cyclic shuffle and channel shuffle fuse information flow in different granularities and they may serve as a complement for each other. 
    
We also explore the way to use channel shuffle. Table \ref{per-block} shows that using per-stage or per-block channel shuffle has similar performance without cyclic shuffle. Together with cyclic shuffle, per-stage channel shuffle has higher accuracy improvement(+3.19$\%$ Top-1 accuracy on ImageNet) than per-block channel shuffle. In this paper, we use per-stage channel shuffle by default unless otherwise specified.
\begin{table}[htbp]
\caption{\textbf{Influence of different shuffle modules}}
\begin{center}
\label{per-block}
\tablestyle{12pt}{1.2}
\begin{tabular}{c|c|c|c}
\shline
Cyclic Shuffle & Channel Shuffle & Top1 Acc.& Top5 Acc.\\
\hline
&  & 67.94 & 87.90\\
& per-block& 70.31 &89.58\\
& per-stage& 70.24 &89.51\\
\checkmark&  & 70.22 & 89.55\\
\checkmark & per-block& 70.62 &89.69\\
\checkmark & \textbf{per-stage} & \textbf{71.13} & \textbf{89.99}\\
\hline
\multicolumn{2}{c|}{$\Delta$} & \textbf{3.19}$\uparrow$ & \textbf{2.09}$\uparrow$\\
\shline
\end{tabular}
\end{center}
\end{table}
\paragraph{Ablation: Importance of Cyclic Permutation.}
As the cyclic shuffle module brings in one extra 1$\times$1 group convolution, one may concern that most of the accuracy gains come from the additional computational cost. Experimental results in Table. \ref{CyclicPermute} are against this opinion. PalQuant loses 0.76 $\%$ top-1 accuracy after dropping the cyclic permutation and keeping other parts unchanged while dropping the whole cyclic shuffle module causes 0.9 $\%$ top-1 accuracy decline. This indicates that the gain brought by the cyclic shuffle module is mainly due to cyclic permutation.
\begin{table}[htbp]
\caption{\textbf{Importance of cyclic permutation}}
\begin{center}\label{CyclicPermute}
\tablestyle{10pt}{1.2}
\begin{tabular}{c|ll|ll}
\shline
Module & \multicolumn{2}{l|}{Top1 Acc.}& \multicolumn{2}{l}{Top5 Acc.}\\
\hline
PalQuant& \textbf{71.13}& &\textbf{89.99}&\\
w.o cyclic permutation & 70.37\cgaphlp{-0.76} & &89.57\cgaphlp{-0.42}&\\
w.o cyclic shuffle & 70.23\cgaphlp{-0.90} & &89.51\cgaphlp{-0.48}&\\
\hline
\shline
\end{tabular}
\end{center}
\end{table}
\subsection{Generalization}
\paragraph{Results With Different Numbers of Parallel Groups}
In this section, we explore the generalization of PalQuant under different numbers of parallel groups $G$. 
In Table. \ref{base_parallel_groups}, our proposed method outperforms  LSQ \cite{esser2019learned} consistently on a various number of parallel groups, which demonstrates the generalization ability of PalQuant. In addition, PalQuant shows superior performance (nearly 3$\%$ higher top-1 accuracy) than LSQ \cite{esser2019learned} under the same computational budget. 
\begin{table}[htbp]
\caption{\textbf{Results with different numbers of parallel groups}}
\begin{center}\label{base_parallel_groups}
\tablestyle{10pt}{1.0}
\begin{tabular}{c|c|c|c|c}
\shline
Method & Precision & BitOps & Top1 Acc. &Top5 Acc.\\
\hline
    LSQ \cite{esser2019learned}  &  4b A,4b W  & 29.10G&70.99 &	89.86  \\ 
    LSQ \cite{esser2019learned}  &  4b A,2b W &14.55G &69.18 &	88.93  \\ 
    PalQuant(\textbf{Ours})  & B=2,G=2   &14.87G & \textbf{71.13} & \textbf{89.99}  \\ \hline
     LSQ \cite{esser2019learned}  &  6b A,6b W &65.48G &71.58 &	90.24 \\ 
    LSQ \cite{esser2019learned} &  6b A,2b W &21.83G & 69.47 &	89.06 	   \\ 
    PalQuant(\textbf{Ours})  & B=2,G=3   &22.30G & \textbf{72.36}&	\textbf{90.63}  \\ \hline
       LSQ \cite{esser2019learned}   &  8b A,8b W  & 116.40G& 71.10 &	90.10  \\ 
    LSQ \cite{esser2019learned}   &  8b A,2b W &29.10G & 70.71 &	89.70	   \\ 
    PalQuant(\textbf{Ours})  & B=2,G=4   &29.73G & \textbf{72.74}&	\textbf{90.90}  \\ \shline
\end{tabular}
\end{center}
\end{table}
\paragraph{Results With Different Quantization Precision}
Next, we explore the performance of PalQuant under different quantization precision $B$. Table. \ref{base_bitwidth} shows that PalQuant outperforms LSQ \cite{esser2019learned} consistently across different low-precision bit-width, i.e. $B$=2, 3, and 4. 
This result means that our proposed scheme can be applied to CNNs accelerators with different quantization precision. We also quantize the weights with only half of activation bit-width in LSQ \cite{esser2019learned}, which has nearly the same computation complexity as PalQuant. The result shows that our method outperforms LSQ \cite{esser2019learned} by a large margin under the comparable computational budget. The result of LSQ\cite{esser2019learned} with 6/3-bit quantization is provided in Appendix.
\begin{table}[htbp]
\caption{\textbf{Results with different quantization precision}}
\begin{center}\label{base_bitwidth}
\tablestyle{10pt}{1.0}
\begin{tabular}{c|c|c|c|c}
\shline
Method&Precision&BitOps&Top1 Acc. &Top5 Acc.\\
\hline
 LSQ \cite{esser2019learned} &  4b A,4b W & 29.10G& 70.99 &	89.86  \\ 
LSQ \cite{esser2019learned} &  4b A,2b W & 14.55G& 69.18 &	88.93  \\ 
PalQuant(\textbf{Ours})  & B=2,G=2   &14.87G & \textbf{71.13} & \textbf{89.99}  \\ \hline
LSQ \cite{esser2019learned}  &  6b A,6b W  &65.48G &71.58 &	90.24 \\ 
LSQ \cite{esser2019learned}  &  6b A,3b W & 32.74G& 70.896 &	89.712  \\ 
PalQuant(\textbf{Ours}) & B=3,G=2   & 33.45G&\textbf{72.72}&	\textbf{   90.90}   \\ \hline
LSQ \cite{esser2019learned}  &  8b A,8b W  &116.40G & 71.10 &	90.10  \\ 
LSQ \cite{esser2019learned}  &  8b A,4b W &58.20G & 71.46&	90.15  \\ 
PalQuant(\textbf{Ours})  & B=4,G=2   &59.46G & \textbf{73.48} & \textbf{91.34}  \\ \shline
\end{tabular}
\end{center}
\end{table}
\section{Conclusion}
\label{section_conclusion}
In this paper, we propose the PalQuant, a parallel low-precision quantization method that achieves efficient and accurate high-precision network inference on low-precision accelerators. To help the information flow across parallel groups, we propose the cyclic shuffle module to aggregate parallel information at group level. Extensive experiments demonstrate that PalQuant outperforms state-of-the-art quantization methods in terms of both accuracy and computational complexity.
\paragraph{Acknowledgements}
This work was supported in part by National Key Research and Development Program of China  (Grant No.2021ZD0201504), and National Natural Science Foundation of China (Grant No.62106267).
\appendix
\renewcommand{\thesection}{\Alph{section}}
\section*{Appendix}
\section{Improve ShuffleNet-v2 with Cyclic Shuffle}
As mentioned in the main-body of the paper, the proposed cyclic shuffle  shows its well performance on group convoltuions for quantized ResNet-18 and ResNet-34. Since we assume that it can serve as a complement of channel shuffle, here we explore the performance of cyclic shuffle on ShuffleNet-v2.  

As we didn't find the official training codes of ShuffleNet-v2, we re-implemented the ShuffleNet-v2 training algorithm under Pytorch framework.
We adopt an SGD optimizer with the momentum set to 0.9. The weight decay is set to 1e-4, and the learning rate initialized by 0.1 is adjusted with a cosine learning rate decay strategy. All models in this subsection are trained from scratch for 300 epochs with a batch size of 512. 

As shown in table \ref{shufflenet_v2}, we can achieve +0.48\% higher top-1 accuracy by adding one cyclic shuffle module at the beginning of stage3 and stage4 in ShuffleNet v2. This indicates that our proposed method can be applied on the deep neural networks that contains group convolutions or channel splitting modules to obtain higher accuracy.

\begin{table}[hbp]
\caption{\textbf{ShuffleNet-V2 with Cyclic Shuffle}}
\begin{center}\label{shufflenet_v2}
\tablestyle{6.5pt}{1.2}
\begin{tabular}{c|ll|ll}
\shline
Module & \multicolumn{2}{l|}{Top1 Acc.}& \multicolumn{2}{l}{Top5 Acc.}\\
\hline
ShuffleNet v2(our impl.) &  68.71& &88.48 &\\
+cyclic shuffle &  \textbf{69.19\cgaphlp{+0.48}} & &\textbf{88.65\cgaphlp{+0.17}}&\\
\hline
\shline
\end{tabular}
\end{center}
\end{table}
\section{BitOps Definition}
Generally speaking, the number of floating-point operations (FLOPS) is the mainstream computational complexity metric. In Section 5 of the paper, we use the number of bit operations (BitOps)\cite{cai2020rethinking} to measure the  computational complexity of  quantized deep neural networks. For a convolutional layer with $t$ kernels of size $c*k*k$, let $h$ and $w$ be the height and width of the output feature map respectively. Then  the number of bit operations (BitOps) is:
\begin{equation}
    \rm{\#BitOps}=b_w\times b_{a}\times t\times c \times k \times k \times h \times w
\end{equation}
where $b_w$ and $b_{a}$ denotes the bit-with of quantized weights and activations, respectively.

\begin{table}[h]
\caption{\textbf{Quantization Results of Plain-18 on ImageNet.}}
\begin{center}
{
\tablestyle{15pt}{1.2}
\begin{tabular}{c|c|c|c}
\shline
Method&Precision&BitOps&Top1 Acc.\\
\hline
Baseline&FP32&-&69.96 \\
LSQ&4b A,4b W &29.10G & 69.90 \\ 
\textbf{PalQuant} &\textbf{B=2,G=2} &\textbf{14.87G }& \textbf{70.12}\\
\shline
\end{tabular}
}
\label{plain18_compare}
\end{center}
\end{table}

\section{PalQuant on CNNs without residual connections}
 
To demonstrate the general applicability of PalQuant, we conduct experiments on Plain-18 network which has the same net architecture as ResNet-18 except for residual connections. Here we re-implement the baseline Plain-18 and LSQ method under the same training settings as PalQuant. 
 Table ~\ref{plain18_compare} shows that PalQuant can achieve \textbf{+0.22}\% higher top1 accuracy than LSQ with nearly only a half of BitOps. This result means that PalQuant can also be applied to deep networks without residual connections. As such, we think PalQuant can't be seen as an extension of ShuffleNet. Besides, 
PalQuant is a quantization method that aims to deploy high-precision networks on low-precision accelerators. One key component of PalQuant is expanding feature map channels and dividing them into groups. And the proposed cyclic shuffle is another contribution of PalQuant. These two contributions make PalQuant differ a lot from ShuffleNet.

\bibliographystyle{splncs04}

\begin{thebibliography}{10}
\providecommand{\url}[1]{\texttt{#1}}
\providecommand{\urlprefix}{URL }
\providecommand{\doi}[1]{https://doi.org/#1}

\bibitem{DBLP:journals/corr/abs-2101-11748}
Abdel{-}Aziz, H., Shafiee, A., Shin, J.H., Pedram, A., Hassoun, J.H.:
  Rethinking floating point overheads for mixed precision {DNN} accelerators.
  CoRR  \textbf{abs/2101.11748} (2021), \url{https://arxiv.org/abs/2101.11748}

\bibitem{andri2017yodann}
Andri, R., Cavigelli, L., Rossi, D., Benini, L.: Yodann: An architecture for
  ultralow power binary-weight cnn acceleration. IEEE Transactions on
  Computer-Aided Design of Integrated Circuits and Systems  \textbf{37}(1),
  48--60 (2017)

\bibitem{bengio2013estimating}
Bengio, Y., L{\'e}onard, N., Courville, A.: Estimating or propagating gradients
  through stochastic neurons for conditional computation. arXiv preprint
  arXiv:1308.3432  (2013)

\bibitem{bogart1989introductory}
Bogart, K.P.: Introductory combinatorics. Saunders College Publishing (1989)

\bibitem{cai2017deep}
Cai, Z., He, X., Sun, J., Vasconcelos, N.: Deep learning with low precision by
  half-wave gaussian quantization. In: Proceedings of the IEEE conference on
  computer vision and pattern recognition. pp. 5918--5926 (2017)

\bibitem{camus2019review}
Camus, V., Mei, L., Enz, C., Verhelst, M.: Review and benchmarking of
  precision-scalable multiply-accumulate unit architectures for embedded
  neural-network processing. Ieee Journal On Emerging And Selected Topics In
  Circuits And Systems  \textbf{9}(4),  697--711 (2019)

\bibitem{chen2014diannao}
Chen, T., Du, Z., Sun, N., Wang, J., Wu, C., Chen, Y., Temam, O.: Diannao: A
  small-footprint high-throughput accelerator for ubiquitous machine-learning.
  ACM SIGARCH Computer Architecture News  \textbf{42}(1),  269--284 (2014)

\bibitem{chen2016eyeriss}
Chen, Y.H., Krishna, T., Emer, J.S., Sze, V.: Eyeriss: An energy-efficient
  reconfigurable accelerator for deep convolutional neural networks. IEEE
  journal of solid-state circuits  \textbf{52}(1),  127--138 (2016)

\bibitem{choi2018pact}
Choi, J., Wang, Z., Venkataramani, S., Chuang, P.I.J., Srinivasan, V.,
  Gopalakrishnan, K.: Pact: Parameterized clipping activation for quantized
  neural networks. arXiv preprint arXiv:1805.06085  (2018)

\bibitem{DBLP:journals/corr/abs-1807-03010}
Conti, F., Schiavone, P.D., Benini, L.: {XNOR} neural engine: a hardware
  accelerator {IP} for 21.6 fj/op binary neural network inference. CoRR
  \textbf{abs/1807.03010} (2018), \url{http://arxiv.org/abs/1807.03010}

\bibitem{courbariaux2015binaryconnect}
Courbariaux, M., Bengio, Y., David, J.P.: Binaryconnect: Training deep neural
  networks with binary weights during propagations. In: Advances in Neural
  Information Processing Systems. pp. 3123--3131 (2015)

\bibitem{delmas2019bit}
Delmas~Lascorz, A., Judd, P., Stuart, D.M., Poulos, Z., Mahmoud, M., Sharify,
  S., Nikolic, M., Siu, K., Moshovos, A.: Bit-tactical: A software/hardware
  approach to exploiting value and bit sparsity in neural networks. In: ASPLOS.
  pp. 749--763 (2019)

\bibitem{esser2019learned}
Esser, S.K., McKinstry, J.L., Bablani, D., Appuswamy, R., Modha, D.S.: Learned
  step size quantization. arXiv preprint arXiv:1902.08153  (2019)

\bibitem{Ghodrati2020BitParallelVC}
Ghodrati, S., Sharma, H., Young, C., Kim, N., Esmaeilzadeh, H.: Bit-parallel
  vector composability for neural acceleration. 2020 57th ACM/IEEE Design
  Automation Conference (DAC) pp.~1--6 (2020)

\bibitem{gong2019differentiable}
Gong, R., Liu, X., Jiang, S., Li, T., Hu, P., Lin, J., Yu, F., Yan, J.:
  Differentiable soft quantization: Bridging full-precision and low-bit neural
  networks. In: Proceedings of the IEEE/CVF International Conference on
  Computer Vision. pp. 4852--4861 (2019)

\bibitem{gupta2015deep}
Gupta, S., Agrawal, A., Gopalakrishnan, K., Narayanan, P.: Deep learning with
  limited numerical precision. CoRR, abs/1502.02551  \textbf{392} (2015)

\bibitem{he2015deep}
He, K., Zhang, X., Ren, S., Sun, J.: Deep residual learning for image
  recognition. IEEE Conference on Computer Vision and Pattern Recognition
  (CVPR)  (2016)

\bibitem{hou2016loss}
Hou, L., Yao, Q., Kwok, J.T.: Loss-aware binarization of deep networks. arXiv
  preprint arXiv:1611.01600  (2016)

\bibitem{hubara2016binarized}
Hubara, I., Courbariaux, M., Soudry, D., El-Yaniv, R., Bengio, Y.: Binarized
  neural networks. Advances in neural information processing systems
  \textbf{29} (2016)

\bibitem{DBLP:conf/nips/HubaraCSEB16}
Hubara, I., Courbariaux, M., Soudry, D., El{-}Yaniv, R., Bengio, Y.: Binarized
  neural networks. In: Lee, D.D., Sugiyama, M., von Luxburg, U., Guyon, I.,
  Garnett, R. (eds.) Advances in Neural Information Processing Systems 29:
  Annual Conference on Neural Information Processing Systems 2016, December
  5-10, 2016, Barcelona, Spain. pp. 4107--4115 (2016),
  \url{https://proceedings.neurips.cc/paper/2016/hash/d8330f857a17c53d217014ee776bfd50-Abstract.html}

\bibitem{jouppi2017datacenter}
Jouppi, N.P., Young, C., Patil, N., Patterson, D., Agrawal, G., Bajwa, R.,
  Bates, S., Bhatia, S., Boden, N., Borchers, A., et~al.: In-datacenter
  performance analysis of a tensor processing unit. In: Proceedings of the 44th
  annual international symposium on computer architecture. pp. 1--12 (2017)

\bibitem{jung2019learning}
Jung, S., Son, C., Lee, S., Son, J., Han, J.J., Kwak, Y., Hwang, S.J., Choi,
  C.: Learning to quantize deep networks by optimizing quantization intervals
  with task loss. In: Proceedings of the IEEE/CVF Conference on Computer Vision
  and Pattern Recognition. pp. 4350--4359 (2019)

\bibitem{lee2021network}
Lee, J., Kim, D., Ham, B.: Network quantization with element-wise gradient
  scaling. In: Proceedings of the IEEE/CVF Conference on Computer Vision and
  Pattern Recognition (2021)

\bibitem{li2019system}
Li, F., Mo, Z., Wang, P., Liu, Z., Zhang, J., Li, G., Hu, Q., He, X., Leng, C.,
  Zhang, Y., et~al.: A system-level solution for low-power object detection.
  In: Proceedings of the IEEE/CVF International Conference on Computer Vision
  Workshops. pp.~0--0 (2019)

\bibitem{li2016ternary}
Li, F., Zhang, B., Liu, B.: Ternary weight networks. arXiv preprint
  arXiv:1605.04711  (2016)

\bibitem{lin2015neural}
Lin, Z., Courbariaux, M., Memisevic, R., Bengio, Y.: Neural networks with few
  multiplications. arXiv preprint arXiv:1510.03009  (2015)

\bibitem{mckinstry2018discovering}
McKinstry, J.L., Esser, S.K., Appuswamy, R., Bablani, D., Arthur, J.V., Yildiz,
  I.B., Modha, D.S.: Discovering low-precision networks close to full-precision
  networks for efficient embedded inference. arXiv preprint arXiv:1809.04191
  (2018)

\bibitem{mckinstry2019discovering}
McKinstry, J.L., Esser, S.K., Appuswamy, R., Bablani, D., Arthur, J.V., Yildiz,
  I.B., Modha, D.S.: Discovering low-precision networks close to full-precision
  networks for efficient inference. In: 2019 Fifth Workshop on Energy Efficient
  Machine Learning and Cognitive Computing-NeurIPS Edition (EMC2-NIPS).
  pp.~6--9. IEEE (2019)

\bibitem{mishra2017wrpn}
Mishra, A., Nurvitadhi, E., Cook, J.J., Marr, D.: Wrpn: Wide reduced-precision
  networks. arXiv preprint arXiv:1709.01134  (2017)

\bibitem{qiu2016going}
Qiu, J., Wang, J., Yao, S., Guo, K., Li, B., Zhou, E., Yu, J., Tang, T., Xu,
  N., Song, S., et~al.: Going deeper with embedded fpga platform for
  convolutional neural network. In: Proceedings of the 2016 ACM/SIGDA
  International Symposium on Field-Programmable Gate Arrays. pp. 26--35. ACM
  (2016)

\bibitem{rastegari2016xnor}
Rastegari, M., Ordonez, V., Redmon, J., Farhadi, A.: Xnor-net: Imagenet
  classification using binary convolutional neural networks. In: European
  Conference on Computer Vision. pp. 525--542. Springer (2016)

\bibitem{ryu2019bitblade}
Ryu, S., Kim, H., Yi, W., Kim, J.J.: Bitblade: Area and energy-efficient
  precision-scalable neural network accelerator with bitwise summation. In:
  Proceedings of the 56th Annual Design Automation Conference 2019. pp.~1--6
  (2019)

\bibitem{8416871}
Sharma, H., Park, J., Suda, N., Lai, L., Chau, B., Chandra, V., Esmaeilzadeh,
  H.: Bit fusion: Bit-level dynamically composable architecture for
  accelerating deep neural network. In: 2018 ACM/IEEE 45th Annual International
  Symposium on Computer Architecture (ISCA). pp. 764--775 (2018).
  \doi{10.1109/ISCA.2018.00069}

\bibitem{tann2017hardware}
Tann, H., Hashemi, S., Bahar, R.I., Reda, S.: Hardware-software codesign of
  accurate, multiplier-free deep neural networks. In: 2017 54th ACM/EDAC/IEEE
  Design Automation Conference (DAC). pp.~1--6. IEEE (2017)

\bibitem{umuroglu2017finn}
Umuroglu, Y., Fraser, N.J., Gambardella, G., Blott, M., Leong, P., Jahre, M.,
  Vissers, K.: Finn: A framework for fast, scalable binarized neural network
  inference. In: Proceedings of the 2017 ACM/SIGDA International Symposium on
  Field-Programmable Gate Arrays. pp. 65--74 (2017)

\bibitem{wussing2007genesis}
Wussing, H.: The genesis of the abstract group concept: a contribution to the
  history of the origin of abstract group theory. Courier Corporation (2007)

\bibitem{zhang2015optimizing}
Zhang, C., Li, P., Sun, G., Guan, Y., Xiao, B., Cong, J.: Optimizing fpga-based
  accelerator design for deep convolutional neural networks. In: Proceedings of
  the 2015 ACM/SIGDA international symposium on field-programmable gate arrays.
  pp. 161--170 (2015)

\bibitem{zhang2018lq}
Zhang, D., Yang, J., Ye, D., Hua, G.: Lq-nets: Learned quantization for highly
  accurate and compact deep neural networks. In: Proceedings of the European
  conference on computer vision (ECCV). pp. 365--382 (2018)

\bibitem{zhao2017accelerating}
Zhao, R., Song, W., Zhang, W., Xing, T., Lin, J.H., Srivastava, M., Gupta, R.,
  Zhang, Z.: Accelerating binarized convolutional neural networks with
  software-programmable fpgas. In: Proceedings of the 2017 ACM/SIGDA
  International Symposium on Field-Programmable Gate Arrays. pp. 15--24 (2017)

\bibitem{zhou2017incremental}
Zhou, A., Yao, A., Guo, Y., Xu, L., Chen, Y.: Incremental network quantization:
  Towards lossless cnns with low-precision weights. arXiv preprint
  arXiv:1702.03044  (2017)

\bibitem{zhu2016trained}
Zhu, C., Han, S., Mao, H., Dally, W.J.: Trained ternary quantization. arXiv
  preprint arXiv:1612.01064  (2016)
  
\bibitem{cai2020rethinking}
Cai, Z., Vasconcelos, N.: Rethinking differentiable search for mixed-precision
  neural networks. In: Proceedings of the IEEE/CVF Conference on Computer
  Vision and Pattern Recognition. pp. 2349--2358 (2020)

\end{thebibliography}

\end{document}